\thanks{
    Charles University, Faculty  of  Mathematics  and  Physics, Department of Applied Mathematics, Malostransk\'e n\'am.~25, 11800, Prague, Czech Republic, e-mail: \texttt{milan.hladik@matfyz.cz}}
  \thanks[inr]{
    INRIA Sophia-Antipolis M\'editerran\'ee, 2004 route des Lucioles, BP 93, 06902
    Sophia-Antipolis Cedex, France, e-mail: \texttt{FirstName.LastName(AT)inria.fr}}
\thanks{Computer Science Department, Aarhus University, Denmark.
  e-mail: \texttt{elias@cs.au.dk}}
\newcommand{\tluste}[1]{\mbox{\mathversion{bold}$ #1 $}}
\newcommand{\vr}[1]{{{#1}}}
\newcommand{\mace}[1]{{{#1}}}
\newcommand{\mna}[1]{{\mathcal{#1}}}
\newcommand{\omace}[1]{\mbox{$\overline{\mace{#1}}$}} 
\newcommand{\umace}[1]{\mbox{$\underline{\mace{#1}}$}} 
\newcommand{\imace}[1]{\mbox{$\tluste{#1}$}} 
\newcommand{\simace}[1]{\mbox{$\tluste{\scriptstyle #1}$}} 
\newcommand{\smace}[1]{\tluste{#1}{}^S} 
\def\Mid#1{{#1}_c}
\def\Rad#1{{#1}_\Delta}
\newcommand{\ovr}[1]{\mbox{$\overline{\vr{#1}}$}} 
\newcommand{\uvr}[1]{\mbox{$\underline{\vr{#1}}$}}
\newcommand{\ov}[1]{\mbox{$\overline{{#1}}$}} 
\newcommand{\uv}[1]{\mbox{$\underline{{#1}}$}}
\newcommand{\ivr}[1]{\mbox{$\tluste{#1}$}} 
\newcommand{\inum}[1]{\mbox{$\tluste{#1}$}}
\newcommand{\R}[0]{{\mathbb{R}}}
\def\L{{\Lambda}}
\def\inner{\mu}
\def\outer{\omega}
\def\conv{\mathrm{conv}\,}
\newcommand{\seznam}[1]{{\{1, \ldots, {#1}\}}}
\newcommand{\sseznam}[2]{{\{{#1}, \ldots, {#2}\}}}
\def\clqq{``}
\def\crqq{''}
\def\quo#1{\clqq{}#1\crqq{}}  
\newcommand{\maxim}[2]{{\max {\{#1; \ {} #2 \}} }}
\def\sgn#1{\mathop{\mathrm{sgn}(#1)}}
\def\diag#1{\mathop{\mathrm{diag}(#1)}}
\def\nref#1{$(\ref{#1})$}
\begin{document}
 \RRNo{7544}
\makeRR   

\newtheorem{theorem}{Theorem}
\newtheorem{assertion}{Assertion}
\newtheorem{proposition}{Proposition}
\newtheorem{lemma}{Lemma}
\newtheorem{corollary}{Corollary}
\newtheorem{conjecture}{Conjecture}
\theoremstyle{definition}
\newtheorem{alg}{Algorithm}
\newtheorem{definition}{Definition}
\newtheorem{example}{Example}
\newtheorem{remark}{Remark}

\section{Introduction}

Computing eigenvalues of a matrix is a basic linear algebraic task
used throughout in mathematics, physics and computer science. Real
life makes this problem more complicated by imposing uncertainties and
measurement errors on the matrix entries. We suppose we are given some
compact intervals in which the matrix entries can perturb. The set of
all possible real eigenvalues forms a compact set, and the question
that we deal with in this paper is how to characterize and compute it.

The interval eigenvalue problem has its own history. The first results are
probably due to Deif \cite{Dei1991} and Rohn \& Deif
\cite{RohDei1992}: bounds for real and imaginary parts for complex
eigenvalues were studied by Deif \cite{Dei1991}, while Rohn \& Deif
\cite{RohDei1992} considered real eigenvalues. Their theorems are
applicable only under an assumption on sign pattern invariancy of
eigenvectors, which is not easy to verify (cf. \cite{DeiRoh1994}). A
boundary point characterization of eigenvalue set was given by Rohn
\cite{Roh1993}, and it was used by Hlad\'{i}k et al. \cite{HlaDan2008}
to develop a branch \& prune algorithm producing an arbitrarily tight
approximation of the eigenvalue set. Another approximate method was
given by Qiu et al. \cite{QiuMulFro2001}. The related topic of finding
verified intervals of eigenvalues for real matrices was studied
in e.g.~\cite{AleMay2000}.

In this paper we focus on the case of the symmetric eigenvalue
problem. Symmetric matrices naturally appear in many practical
problems, but symmetric interval matrices are hard to deal with.  This
is so, mainly due to the so-called dependencies, that is, correlations
between the matrix components. If we \quo{forget} these dependencies
and solve the problem by reducing it to the previous case, then the
results will be greatly overestimated, in general (but not the extremal points,
see Theorem~\ref{thmSymUnsym}). From now on we consider only the
symmetric case.

Due to the dependencies just mentioned, the theoretical background for the
eigenvalue problem of symmetric interval matrices is not wide enough and
there are few practical methods. The known results are that by Deif
\cite{Dei1991} and Hertz \cite{Her1992}. Deif \cite{Dei1991} gives an
exact description of the eigenvalue set together with restrictive its
assumptions. Hertz \cite{Her1992} (cf. \cite{Roh2005}) proposed a
formula for computing two extremal points of the eigenvalue set---the
largest and the smallest one. As the problem itself is very hard, it
is not surprising conjectures on the problem \cite{QiuWan2005}
turned out to be wrong \cite{YuaHe2008}.

In the recent years, several approximation algorithms have been
developed.  An evolution strategy method by Yuan et
al. \cite{YuaHe2008} yields an inner approximation of the eigenvalue
set. By means of matrix perturbation theory, Qiu et
al. \cite{QiuChe1996} proposed an algorithm for approximate bounds,
and Leng \& He \cite{LenHe2007} for an outer estimation. An outer estimation
was also considered by Kolev~\cite{Kol2006}, but for general case with
nonlinear dependencies. Some initial bounds that are easy and fast to
compute were discussed by Hlad\'{i}k et al. \cite{HlaDan2008b}. An
iterative algorithm for outer estimation was given by Beaumont
\cite{Bea2000}.

This problem has a lot of applications in the field of mechanics and
engineering. Let us mention for instance automobile suspension systems
\cite{QiuMulFro2001}, mass structures \cite{QiuChe1996}, vibrating
systems \cite{Dim1995}, principal component analysis
\cite{GioLau2006}, and robotics \cite{chablat04}. Another applications
arise from the engineering area concerning singular values and
condition numbers. Using the well-known Jordan--Wielandt
transformation \cite{GolLoa1996, HorJoh1985, Mey2000} we can simply
reduce a singular value calculation to a symmetric eigenvalue one.

The rest of the paper is structured as follows: In
Sec.~\ref{sec:prelim} we introduce the notation that we use throughout
the paper.  In Sec.~\ref{sec:main-th} we present our main theoretical
result, and in Sec.~\ref{sAlg} we develop our algorithms for the
problem. Finally, in Sec.~\ref{sNumer} we demonstrate
our approach by a number of examples and numerical experiments, 
and we conclude in Sec.~\ref{sec:conclusion}.

\section{Basic definitions and theoretical background}\label{sec:prelim}

Let us introduce some notions first. An interval matrix is defined as 
$$\imace{A}:=[\umace{A},\omace{A}]=\{\mace{A}\in \R^{m \times n};\; \umace{A}\leq\mace{A}\leq\omace{A}\},$$
where $\umace{A},\,\omace{A}\in\R^{m\times n}$,
$\umace{A}\leq\omace{A}$, are given matrices. By
\begin{align*}
  \mace{A}_c &:= \frac{1}{2}(\umace{A}+\omace{A}),\quad 
  \mace{A}_\Delta := \frac{1}{2}(\omace{A}-\umace{A})
\end{align*}
we denote the midpoint and the radius of $\imace{A}$, respectively.

By an interval linear system of equations $\imace{A}\vr{x}=\ivr{b}$ we
mean a family of systems $\mace{A}\vr{x}=\vr{b}$, such that 
$\mace{A}\in\imace{A}$, $\vr{b}\in\ivr{b}$. In a similar way we
introduce interval linear systems of inequalities and mixed systems of
equations and inequalities. A vector $\vr{x}$ is a solution of
$\imace{A}\vr{x}=\ivr{b}$ if it is a solution of
$\mace{A}\vr{x}=\vr{b}$ for some $\mace{A}\in\imace{A}$ and
$\vr{b}\in\ivr{b}$.
We assume that the reader is familiar with the basics of interval
arithmetic; for further details we refer to e.g.~\cite{AleHer1983,HanWal2004,Neu1990}.

Let $\mna{F}$ be a family of $n\times n$ matrices. 
We denote the eigenvalue set of the family $\mna{F}$ by 
$$\L(\mna{F}):=\{\lambda\in\R;\; \mace{A}\vr{x}=\lambda\vr{x},\ \vr{x}\not=\vr{0},\ \mace{A}\in\mna{F} \}.$$

\emph{A symmetric interval matrix} as defined as 
$$\smace{A}:=\{\mace{A}\in\imace{A}\mid\mace{A}=\mace{A}^T\}.$$
It is usually a proper subset of $\imace{A}$.
Considering the eigenvalue set $\L(\imace{A})$, it, generally, represents an
overestimation of $\L(\smace{A})$. That is why we focus directly on
eigenvalue set of the symmetric counterpart, even though we must take
into account the dependencies between the elements, in the definition of $\smace{A}$.

A real symmetric matrix $\mace{A}\in\R^{n\times n}$ has always $n$
real eigenvalues, let us sort them in a non-increasing order
$$
\lambda_1(\mace{A})\geq\lambda_2(\mace{A})
\geq\dots\geq\lambda_n(\mace{A}).
$$
We extend this notation for symmetric interval matrices
$$\ivr{\lambda}_i(\smace{A}):=\big\{\lambda_i(\mace{A})\mid\mace{A}\in\smace{A}\big\}.$$
These sets represent $n$ compact intervals $\inum{\lambda}_i(\smace{A})=[\uv{\lambda}_i(\smace{A}),\ov{\lambda}_i(\smace{A})]$, $i=1,\dots,n$; cf. \cite{HlaDan2008b}. 
The intervals can be disjoint, can overlap, or some of them, can be
identical.  However, what it can not happen is one interval to be a
proper subset of another interval.  The union of these intervals
produces $\L(\smace{A})$.

Throughout the paper we use the following notation:
%
%
\begin{center}
\smallskip
\begin{tabular*}{0.85\textwidth}{lp{0.75\textwidth}}
 $\lambda_i(\mace{A})$ & 
   the $i$th eigenvalue of a symmetric matrix $\mace{A}$
   (in a non-increasing order)\\[1mm]
 $\sigma_i(\mace{A})$ & 
   the $i$th singular value of a matrix $\mace{A}$
   (in a non-increasing order)\\[1mm]
 $\vr{v}_i(\mace{A})$ & 
   the eigenvector associated to the $i$th eigenvalue of
   a symmetric matrix $\mace{A}$\\[1mm]
 $\rho(\mace{A})$ & 
   the spectral radius of a matrix $\mace{A}$ \\[1mm]
 $\partial\mna{S}$  & 
   the boundary of a set $\mna{S}$  \\[1mm]
 $\conv\mna{S}$  & 
   the convex hull of a set $\mna{S}$  \\[1mm]
 $\diag{\vr{y}}$ & 
   the diagonal matrix with entries $y_1,\dots,y_n$ \\[1mm]
 $\sgn{\vr{x}}$ & 
   the sign vector of a vector $x$, i.e., 
   $\sgn{\vr{x}}=(\sgn{x_1},\dots,\sgn{x_n})^T$ \\[1mm]
 $\|\vr{x}\|_2$ &
   the Euclidean vector norm, i.e., 
   $\|\vr{x}\|_2=\sqrt{\vr{x}^T\vr{x}}$ \\[1mm]
 $\|\vr{x}\|_\infty$ &
   the Chebyshev (maximum) vector norm, i.e., 
   $\|\vr{x}\|_\infty=\maxim{|x|_i}{i=1,\dots,n}$ \\[1mm]
 $\vr{x}\leq\vr{y}$, $\mace{A}\leq\mace{B}$ &
   vector and matrix relations are understood component-wise \\[1mm]
\end{tabular*}
\end{center}

\section{Main theorem}
\label{sec:main-th}

The following theorem is the main theoretical result of the the present paper.
We remind the reader that the principal $m\times m$ submatrix of a
given $n \times n$ matrix is any submatrix obtained by eliminating any
$n-m$ rows and the same $n-m$ columns.

\begin{theorem}\label{thmBdDes}
  Let $\lambda\in\partial \L(\smace{A})$. There is a principal
  submatrix $\smace{\widetilde{A}}\in\R^{k\times k}$ of $\smace{A}$
  such that:
  \begin{itemize}
  \item If $\lambda=\ovr{\lambda}_j(\smace{A})$ for some $j\in\seznam{n}$, then 
    \begin{align}\label{bdLamUpper}
      \lambda\in
      \big\{\lambda_i\big(\Mid{\widetilde{A}}+
      \diag{\vr{z}}\Rad{\widetilde{A}}\diag{\vr{z}}\big);\ 
      \vr{z}\in\{\pm1\}^k,\ i=1,\dots,k\big\}
      \enspace .
    \end{align}
  \item  If $\lambda=\uvr{\lambda}_j(\smace{A})$ for some $j\in\seznam{n}$, then
    \begin{align}\label{bdLamLower}
      \lambda\in
      \big\{\lambda_i\big(\Mid{\widetilde{A}}-
      \diag{\vr{z}}\Rad{\widetilde{A}}\diag{\vr{z}}\big);\ 
      \vr{z}\in\{\pm1\}^k,\ i=1,\dots,k\big\}
      \enspace .
    \end{align}
  \end{itemize}
\end{theorem}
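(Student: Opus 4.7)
The two cases are symmetric: replacing $\smace{A}$ by $-\smace{A}$ exchanges the upper and lower eigenvalue envelopes and swaps \eqref{bdLamUpper} with \eqref{bdLamLower}. I would therefore only argue the case $\lambda=\ovr{\lambda}_j(\smace{A})$. Since $\smace{A}$ is compact and $\lambda_j(\cdot)$ is continuous on symmetric matrices, there exists $A^\ast\in\smace{A}$ attaining $\lambda_j(A^\ast)=\lambda$. Let $E$ denote the $A^\ast$-eigenspace associated with $\lambda$, and fix a unit vector $v\in E$.

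The core step is to establish the \emph{maximality inequality}
$$
v^T\delta A\, v \le 0\quad\text{for every symmetric }\delta A\text{ such that }A^\ast+t\delta A\in\smace{A}\text{ for all small }t>0.
$$
Let $m=\dim E$ and let $P_E$ be the orthogonal projector onto $E$. Because $\lambda=\lambda_j(A^\ast)$ sits at the top of the cluster of $m$ coincident eigenvalues of $A^\ast$, classical perturbation theory for symmetric matrices (Rellich/Kato) says that along $A^\ast+t\delta A$ the $m$ eigenvalues splitting off from $\lambda$ have right-derivatives at $t=0$ equal to the eigenvalues of $P_E\,\delta A\,P_E$, and the $j$-th eigenvalue $\lambda_j(A^\ast+t\delta A)$ follows the largest of them. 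A violation of the displayed inequality by some $v\in E$ would force $P_E\,\delta A\,P_E$ to have a strictly positive eigenvalue, yielding $\lambda_j(A^\ast+t\delta A)>\lambda$ for small $t>0$ and contradicting the maximality of $\ovr{\lambda}_j(\smace{A})$.

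I would then specialise the inequality to the elementary symmetric directions $\delta A=\pm(e_ie_j^T+e_je_i^T)$ for $i\ne j$ and $\delta A=\pm e_ie_i^T$ for $i=j$. Admissibility of such a $\delta A$ for small $t>0$ is equivalent to the entry $A^\ast_{ij}$ not having yet reached $\ovr{A}_{ij}$ or $\uvr{A}_{ij}$, while $v^T\delta A\,v$ evaluates to $\pm 2v_iv_j$ (resp.\ $\pm v_i^2$). Combining the two gives, for every pair $(i,j)$ with $v_iv_j\ne 0$,
$$
A^\ast_{ij}=(A_c)_{ij}+\sgn{v_i}\cdot\sgn{v_j}\cdot(A_\Delta)_{ij}.
$$
Setting $K=\{i:v_i\ne 0\}$, letting $\widetilde A$ be the principal submatrix of $A^\ast$ indexed by $K$, and $z=\big(\sgn{v_i}\big)_{i\in K}\in\{\pm 1\}^{|K|}$, the above identity precisely says that $\widetilde A=\Mid{\widetilde A}+\diag{z}\Rad{\widetilde A}\diag{z}$, where $\Mid{\widetilde A}$ and $\Rad{\widetilde A}$ refer to the corresponding principal subinterval matrix of $\smace{A}$.

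Finally, restricting the identity $A^\ast v=\lambda v$ to the rows indexed by $K$ gives $\widetilde A\,\widetilde v=\lambda \widetilde v$ with $\widetilde v=v|_K$, since all omitted contributions $A^\ast_{ij}v_j$ with $j\notin K$ vanish. Hence $\lambda$ appears as some eigenvalue of the vertex matrix $\Mid{\widetilde A}+\diag{z}\Rad{\widetilde A}\diag{z}$, which is exactly the membership asserted in \eqref{bdLamUpper}. The one delicate point in this plan is the perturbation-theoretic analysis at a possibly degenerate eigenvalue $\lambda$; once the maximality inequality is secured, the identification of $A^\ast$ as a vertex on the active indices and the reduction to a principal submatrix are elementary.
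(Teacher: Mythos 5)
Your proof follows essentially the same route as the paper's: pick a matrix $A^\ast\in\smace{A}$ attaining $\lambda$, use first-order stationarity of the extremal eigenvalue to force every entry $A^\ast_{ij}$ with $v_iv_j\neq0$ to the endpoint $(A_c)_{ij}+\mathrm{sgn}(v_i)\,\mathrm{sgn}(v_j)\,(A_\Delta)_{ij}$, and restrict $A^\ast v=\lambda v$ to the support of $v$ to exhibit $\lambda$ as an eigenvalue of a vertex of a principal submatrix. You are in fact more careful than the paper (which simply writes $\partial\lambda/\partial d_{ij}=y_iy_j$, tacitly assuming $\lambda$ is simple); the only point to tidy in your version is the assertion that $\lambda_j(A^\ast)$ sits at the top of its eigenvalue cluster, which is not automatic for the given $j$ but can be arranged by re-choosing $j$ to be the smallest index in the cluster, using $\lambda\in\partial\L(\smace{A})$ (the remaining degenerate case being trivial).
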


\begin{proof}
  Let $\lambda\in\partial \L(\smace{A})$.
  Then either
  $\lambda=\ovr{\lambda}_j(\smace{A})$ or
  $\lambda=\uvr{\lambda}_j(\smace{A})$,
  for some $j\in\seznam{n}$. 
  We assume the former case. The latter could be proved similarly.

  The eigenvalue $\lambda$ is achieved for a matrix
  $\mace{A}\in\imace{A}$. It is without loss of generality to assume
  that the corresponding eigenvector $\vr{x}$, $\|\vr{x}\|_2=1$, is of
  the form $\vr{x}=(\vr{0}^T,\vr{y}^T)^T$, where $\vr{y}\in\R^k$ and
  $y_i\not=0$, for all $1 \leq i \leq k$, and for some $k \in \{1, \dots,
  n\}$. The symmetric interval matrix $\smace{A}$ can be written as
  \begin{align*}
    \smace{A}=
    \begin{pmatrix}\smace{B}&\imace{C}\\
      \imace{C}^T&\smace{D}\end{pmatrix},
  \end{align*}
  where $\smace{B}\subset\R^{(n-k)\times(n-k)}$,
  $\imace{C}\subset\R^{(n-k)\times k}$, $\smace{D}\subset\R^{k\times
    k}$.  

  From the basic equality $\mace{A}\vr{x}=\lambda\vr{x}$ it
  follows that
  \begin{align} \label{pfLamC}
    \mace{C}\vr{y}=\vr{0} \  \mbox{ for some }\mace{C}\in\imace{C},
  \end{align}
  and
  \begin{align}\label{pfLamD}
    \mace{D}\vr{y}=\lambda\vr{y} \ 
    \mbox{ for some }\mace{D}\in\smace{D}.
  \end{align}
  We focus on the latter relation; it says that
  $\lambda$ is an eigenvalue of $\mace{D}$. We will show that
  $\smace{D}$ is the required principal submatrix
  $\smace{\widetilde{A}}$ and $\mace{D}$ could be written as in \nref{bdLamUpper}.

  From \nref{pfLamD} we have that $\lambda=\vr{y}^T\mace{D}\vr{y}$, 
  and hence the partial derivatives are
  \begin{align*}
    \frac{\partial\lambda}{\partial{d}_{ij}}
    =y_iy_j\not=0, \quad i,j=1,\dots,k.
  \end{align*}
  This relation strongly influences the structure of $\mace{D}$.  
  If $y_iy_j>0$, then $d_{ij}=\ov{d}_{ij}$. This is so, because otherwise by increasing
  $d_{ij}$ we also increase the value of $\lambda$, which contradicts
  our assumption that $\lambda$ lies on the upper boundary of
  $\L(\smace{A})$. Likewise, $y_iy_j<0$ implies
  $d_{ij}=\uv{d}_{ij}$. 
  This allows us to write $\mace{D}$ in the following more compact form
  \begin{align}\label{pfFormD}
    \mace{D}=\Mid{D}+\diag{\vr{z}}\Rad{D}\diag{\vr{z}},
  \end{align}
  where $\vr{z}=\sgn{\vr{y}}\in\{\pm1\}^k$.  Therefore, $\lambda$
  belongs to a set as the one presented in the right-hand side of
  \nref{bdLamUpper}, which completes the proof.
\end{proof}

Note that not every $\uvr{\lambda}_j(\smace{A})$ or
$\ovr{\lambda}_j(\smace{A})$ is a boundary point of
$\L(\smace{A})$. Theorem~\ref{thmBdDes} is also true for such 
$\uvr{\lambda}_j(\smace{A})$ or $\ovr{\lambda}_j(\smace{A})$ that are
non-boundary, but make no multiple eigenvalue (since the corresponding
eigenvector is uniquely determined). However, truthfulness of
Theorem~\ref{thmBdDes} for all $\uvr{\lambda}_j(\smace{A})$ and
$\ovr{\lambda}_j(\smace{A})$, $j=1,\dots,n$, is still an open
question. Moreover, full characterization of all
$\uvr{\lambda}_j(\smace{A})$ and $\ovr{\lambda}_j(\smace{A})$,
$j=1,\dots,n$, is lacking, too.

As we have already mentioned, in general, the eigenvalue set of an
interval matrix is larger than the eigenvalue set of its symmetric
counterpart. This is true even if both the midpoint and radius
matrices are symmetric (see Example~\ref{exSymUnsym}). The following
theorem says that overestimation caused by the additional matrices is
somehow limited by the intermediate area.

\begin{theorem}\label{thmSymUnsym} 
  Let $\Mid{A},\Rad{A}\in\R^{n\times n}$ be symmetric matrices. Then
  \begin{displaymath} 
    \conv \L(\smace{A})=\conv \L(\imace{A}).
  \end{displaymath}
\end{theorem}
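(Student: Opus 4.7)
The plan is to prove the two inclusions separately. The inclusion $\conv \L(\smace{A}) \subseteq \conv \L(\imace{A})$ is immediate: since $\Mid{A}$ and $\Rad{A}$ are both symmetric, the symmetric interval matrix $\smace{A}$ is a (nonempty) subset of $\imace{A}$, hence $\L(\smace{A}) \subseteq \L(\imace{A})$, and taking convex hulls preserves the inclusion.

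For the reverse inclusion $\conv \L(\imace{A}) \subseteq \conv \L(\smace{A})$, the key idea is symmetrization combined with the Rayleigh--Ritz characterization of extremal eigenvalues. Let $\lambda \in \L(\imace{A})$, so there exists $\mace{A}\in\imace{A}$ and a unit vector $\vr{x}$ with $\mace{A}\vr{x}=\lambda\vr{x}$. First I would rewrite
\begin{align*}
  \lambda = \vr{x}^T\mace{A}\vr{x}
  = \vr{x}^T \mace{A}_s \vr{x}, \quad\text{where}\quad
  \mace{A}_s := \tfrac{1}{2}(\mace{A}+\mace{A}^T).
\end{align*}
The matrix $\mace{A}_s$ is symmetric by construction. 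The crucial observation is that $\mace{A}_s\in\smace{A}$: for each pair $(i,j)$, the symmetry of $\Mid{A}$ and $\Rad{A}$ implies that the intervals $[\uv{a}_{ij},\ov{a}_{ij}]$ and $[\uv{a}_{ji},\ov{a}_{ji}]$ coincide, so the average $\frac{1}{2}(a_{ij}+a_{ji})$ lies in that common interval, while symmetry of $(\mace{A}_s)_{ij}$ is automatic.

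Next I would invoke the Rayleigh--Ritz inequality for the symmetric matrix $\mace{A}_s$ to conclude
\begin{align*}
  \lambda_n(\mace{A}_s)\;\leq\; \vr{x}^T\mace{A}_s\vr{x} \;=\;\lambda\;\leq\;\lambda_1(\mace{A}_s).
\end{align*}
Since $\mace{A}_s\in\smace{A}$, both $\lambda_1(\mace{A}_s)$ and $\lambda_n(\mace{A}_s)$ belong to $\L(\smace{A})$, so $\lambda$ lies in the convex hull of two elements of $\L(\smace{A})$ and hence in $\conv \L(\smace{A})$. This shows $\L(\imace{A})\subseteq \conv \L(\smace{A})$, and taking convex hulls yields the desired inclusion.

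There is no real obstacle here; the only two things that must be checked carefully are that $\mace{A}_s$ really belongs to $\smace{A}$ (which requires both symmetry hypotheses on $\Mid{A}$ and $\Rad{A}$, not only on the midpoint) and that the Rayleigh quotient identity $\vr{x}^T\mace{A}\vr{x}=\vr{x}^T\mace{A}_s\vr{x}$ holds for every (possibly nonsymmetric) real matrix, which is a one-line computation. Everything else is structural.
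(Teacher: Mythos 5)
Your proposal is correct and follows essentially the same route as the paper: both prove the easy inclusion from $\smace{A}\subseteq\imace{A}$ and establish the reverse by symmetrizing to $\frac{1}{2}(\mace{A}+\mace{A}^T)\in\smace{A}$ and sandwiching $\lambda=\vr{x}^T\mace{A}\vr{x}$ between the extreme eigenvalues via the Rayleigh quotient. Your write-up is slightly more explicit than the paper's in verifying that the symmetrized matrix actually lies in $\smace{A}$, but the argument is the same.
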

\begin{proof} 
  The inclusion $\conv \L(\smace{A})\subseteq\conv \L(\imace{A})$
  follows from the definition of the convex hull.

  Let $\mace{A}\in\imace{A}$ be arbitrary, $\lambda$ one of its real eigenvalues,
  and $\vr{x}$ the corresponding eigenvector, where $\|\vr{x}\|_2=1$.
  Let $\mace{B}:=\frac{1}{2}(\mace{A}+\mace{A}^T)\in\smace{A}$,
  then the  following holds:
  \begin{align*} \lambda =\vr{x}^T\mace{A}\vr{x}
    \leq\max_{\|\vr{y}\|_2=1}\vr{y}^T\mace{A}\vr{y}
    =\max_{\|\vr{y}\|_2=1}\vr{y}^T\mace{B}\vr{y}
    =\lambda_1(\mace{B})\in\conv \L(\smace{A}).
  \end{align*} 
  Similarly,
  \begin{align*} \lambda =\vr{x}^T\mace{A}\vr{x}
    \geq\min_{\|\vr{y}\|_2=1}\vr{y}^T\mace{A}\vr{y}
    =\min_{\|\vr{y}\|_2=1}\vr{y}^T\mace{B}\vr{y}
    =\lambda_n(\mace{B})\in\conv \L(\smace{A}).
  \end{align*} 
  Therefore $\lambda\in\conv\L(\smace{A})$,
  and so 
  $\conv \L(\imace{A}) \subseteq \conv \L(\smace{A})$,
  which completes the proof.
\end{proof}

\section{Inner approximation algorithms}\label{sAlg}

Theorem~\ref{thmBdDes} naturally yields an algorithm to compute a very
sharp inner approximation of $\L(\smace{A})$, which could also be exact in some cases.
We will present the algorithm in the sequel (Section~\ref{ssSubMatEnum}). 
First, we define some notions and
propose two simple but useful methods for inner approximations.

Any subset of $\mna{S}$ is called an \emph{inner
  approximation}. Similarly, any set that contains $\mna{S}$ is called
an \emph{outer approximation}.  In our case, an inner approximation of the 
eigenvalue set $\ivr{\lambda}_i(\smace{A})$, is
denoted by
$\inum{\inner}_i(\smace{A})=[\uv{\inner}_i(\smace{A}),\ov{\inner}_i(\smace{A})]\subseteq\inum{\lambda}_i(\smace{A})$,
and an outer approximation is denoted by
$\inum{\outer}_i(\smace{A})=[\uv{\outer}_i(\smace{A}),\ov{\outer}_i(\smace{A})]\supseteq\inum{\lambda}_i(\smace{A})$,
where $1 \leq i \leq n$.

From a practical point of view, an outer approximation is usually more useful.
However, an inner approximation is also important in some applications.
For example, it could be used to measure quality
(sharpness) of an outer approximation, or it could be used to prove
(Hurwitz or Schur) unstability of certain interval matrices, cf. \cite{Roh1996}.

\subsection{Local improvement}

The first algorithm that we present is based on local improvement
search technique. A similar method, but for another class of symmetric
interval matrices was proposed by Rohn \cite{Roh1996}. The basic idea
of the algorithm is to start with an eigenvalue, $\lambda_i(\Mid{A})$,
and the corresponding eigenvector, $\vr{v}_i(\Mid{A})$, of the
midpoint matrix, $\Mid{A}$, and then move to an extremal matrix in
$\smace{A}$ according to the sign pattern of the eigenvector. The
procedure is repeated until no improvement is possible.

Algorithm~\ref{algInnerLocal} outputs the upper boundaries
$\ov{\inner}_i(\smace{A})$ of the inner approximation
$[\uv{\inner}_i(\smace{A}),\ov{\inner}_i(\smace{A})]$, where $1 \leq i
\leq n$. The lower boundaries, $\uv{\inner}_i(\smace{A})$, can be
obtained similarly. The validity of the procedure follows from the
fact that every considered matrix, $\mace{A}$, belongs to $\smace{A}$.

\begin{algorithm}[h]
\caption{(Local improvement for $\ov{\inner}_i(\smace{A})$, $i=1,\ldots,n$)
\label{algInnerLocal}}
\begin{algorithmic}[1]
\FOR{$i=1,\ldots,n$}
\STATE
$\ov{\inner}_i(\smace{A})=-\infty$;
\STATE 
$\mace{A}:=\Mid{A}$;
\WHILE{$\lambda_i(\mace{A})>\ov{\inner}_i(\smace{A})$}
\STATE
$\mace{D}:=\diag{\sgn{\vr{v}_i(\mace{A})}}$;
\STATE
$\mace{A}:=\Mid{A}+\mace{D}\Rad{A}\mace{D}$;
\STATE
$\ov{\inner}_i(\smace{A}):=\lambda_i(\mace{A})$;
\ENDWHILE
\ENDFOR
\RETURN
$\ov{\inner}_i(\smace{A}),\ i=1,\ldots,n$.
\end{algorithmic}
\end{algorithm}

The algorithm terminates after, at most, $2^n$ iterations.  However,
usually in practice the number of iterations is much smaller, which
makes the algorithm attractive for applications. Our numerical
experiments (Section~\ref{sNumer}) indicate 
that the number of iterations is rarely
greater than two, even for matrices of dimension 20. Moreover, the
resulting inner approximation is quite sharp, depending on the width
of intervals in $\smace{A}$. This is not surprising as whenever the
input intervals are narrow enough, the algorithm produces, sometimes
even after the first iteration, exact bounds; see \cite{Dei1991}.  We
refer the reader to Section~\ref{sNumer} for a more detailed
presentation of the experiments.

\subsection{Vertex enumeration}

The second method that we present is based on vertex enumeration. 
It consists of inspecting all matrices 
$$ 
\mace{A}_z:=\Mid{A}+\diag{\vr{z}}\Rad{A}\diag{\vr{z}},
\ \ \vr{z}\in\{\pm1\}^n,\ \ z_1=1,
$$
and continuously improving an inner approximation
$\ov{\inner}_i(\smace{A})$, whenever
$\lambda_i(\mace{A}_z)>\ov{\inner}_i(\smace{A})$, where $1 \leq i \leq
n$.  The lower bounds, $\uv{\inner}_i(\smace{A})$,  could
be obtained in a similar way using the matrices
$\Mid{A}-\diag{\vr{z}}\Rad{A}\diag{\vr{z}}$, where $\vr{z}\in\{\pm1\}^n$,
and $z_1=1$. The condition $z_1=1$ follows from the fact that
$\diag{\vr{z}}\Rad{A}\diag{\vr{z}}=\diag{-\vr{z}}\Rad{A}\diag{-\vr{z}}$,
which gives us the freedom to fix one component of $\vr{z}$.  The
number of steps that the algorithm performs is $2^{n-1}$.  Therefore,
this method is suitable only for matrices of moderate dimensions.

The main advantages of the {\em vertex enumeration} approach are the
following. First, it provides us with sharper inner approximation of the eigenvalue sets than the local improvement. Second,
two of the computed bounds are exact; by Hertz \cite{Her1992}
(cf. \cite{Roh2005}) and Hertz \cite{Her2009} we have that
$\ov{\inner}_1(\smace{A})=\ov{\lambda}_1(\smace{A})$ and
$\uv{\inner}_n(\smace{A})=\uv{\lambda}_n(\smace{A})$. 
The other bounds calculated by the vertex enumeration,
even though it was conjectured that there were exact  \cite{QiuWan2005},
it turned out that they were not exact, in general \cite{YuaHe2008}
The assertion by Hertz \cite[Theorem 1]{Her2009} that $\uv{\inner}_1(\smace{A})=\uv{\lambda}_1(\smace{A})$ and
$\ov{\inner}_n(\smace{A})=\ov{\lambda}_n(\smace{A})$ is wrong, too; see Example~\ref{exmSing}. Nevertheless, Theorem~\ref{thmBdDes} and its proof
indicate a sufficient condition: If no eigenvector corresponding to
an eigenvalue of $\smace{A}$ has a
zero component, then the vertex enumeration yields exactly the
eigenvalue sets $\inum{\lambda}_i(\smace{A})$, $i=1,\dots,n$.

The efficient implementation of this approach is quite challenging.
In order to overcome in practice the exponential complexity of the
algorithm, we implemented a branch \& bound algorithm, which is in
accordance with the suggestions of Rohn \cite{Roh1996}. However, the
adopted bounds are not that tight, and the actual running times are usually 
worse than the direct vertex enumeration. That is why we do not
consider further this variant. The direct vertex enumeration
scheme for computing the upper bounds,  $\ov{\inner}_i(\smace{A})$,
is presented in Algorithm~\ref{algInnerExp}.

\begin{algorithm}[ht]
\caption{(Vertex enumeration for $\ov{\inner}_i(\smace{A})$, $i=1,\ldots,n$)
\label{algInnerExp}}
\begin{algorithmic}[1]
\FOR{$i=1,\ldots,n$}
\STATE
$\ov{\inner}_i(\smace{A})=\lambda_i(\Mid{A})$;
\ENDFOR
\FORALL{$\vr{z}\in\{\pm1\}^n,\ z_1=1,$}
\STATE
$\mace{A}:=\Mid{A}+\diag{z}\Rad{A}\diag{z}$;
\FOR{$i=1,\ldots,n$}
\IF{$\lambda_i(\mace{A})>\ov{\inner}_i(\smace{A})$}
\STATE
$\ov{\inner}_i(\smace{A}):=\lambda_i(\mace{A})$;
\ENDIF
\ENDFOR
\ENDFOR
\RETURN
$\ov{\inner}_i(\smace{A}),\ i=1,\ldots,n$.
\end{algorithmic}
\end{algorithm}

\subsection{Submatrix vertex enumeration}\label{ssSubMatEnum}

In this section we present an algorithm that is based on
Theorem~\ref{thmBdDes}, and it usually produces very tight inner
approximations, even exact ones in some cases.
The basic idea the algorithm is to enumerate all the 
vertices of all the principal submatrices of $\smace{A}$. 
The number of steps performed with this approach is 
$$
2^{n-1}+n2^{n-2}+\binom{n}{2}2^{n-2}+\dots+n2^0=\frac{1}{2}\left(3^n-1\right).
$$
To overcome the obstacle of the exponential number of iterations, at
least in practice, we notice that not all eigenvalues of the principal
submatrices of the matrices in $\smace{A}$ belong to some of the
eigenvalue sets $\inum{\lambda}_i(\smace{A})$, where $1 \leq i \leq n$. 
For this we will introduce a condition for checking such an inclusion.

Assume that we are given an inner approximation
$\inum{\inner}_i(\smace{A})$ and an outer approximation
$\inum{\outer}_i(\smace{A})$ of the eigenvalue sets
$\inum{\lambda}_i(\smace{A})$; that is
$\inum{\inner}_i(\smace{A})\subseteq\inum{\lambda}_i(\smace{A})\subseteq\inum{\outer}_i(\smace{A})$,
where $1 \leq i \leq n$.  As we will see in the sequel, the quality of
the output of our methods depends naturally on the sharpness of the
outer approximation used.

Let $\smace{D}\subset\R^{k\times k}$ be a principal submatrix of
$\smace{A}$ and, without loss of generality, assume that it is situated
in the right-bottom corner, i.e.,
\begin{align*}
\smace{A}=
\begin{pmatrix}\smace{B}&\imace{C}\\
\imace{C}^T&\smace{D}\end{pmatrix},
\end{align*}
where $\smace{B}\subset\R^{(n-k)\times(n-k)}$ and
$\imace{C}\subset\R^{(n-k)\times k}$. Let $\lambda$ be an eigenvalue
of some vertex matrix $\mace{D}\in\smace{D}$ which is of the form
\nref{pfFormD}, and let $\vr{y}$ be the corresponding eigenvector. If
the eigenvector is not unique then $\lambda$ is a multiple eigenvalue
and therefore it is a simple eigenvalue of some principal
submatrix of $\smace{D}$; in this case we restrict our consideration
to this submatrix.
 
We want to determine whether $\lambda$ is equal to
$\ovr{\lambda}_p(\smace{A})\in\L(\smace{A})$ for some fixed
$p\in\seznam{n}$, or if this is not possible, to improve the upper bound
$\ov{\inner}_p(\smace{A})$; the lower bound can be handled
accordingly. In view of \nref{pfLamC}, it must be
$$\vr{0}\in\imace{C}\vr{y} \enspace,$$
so that $\lambda$ to be an eigenvalue  
of some matrix in $\smace{A}$. Now, we are sure that
$\lambda\in\L(\smace{A})$ and it remains to determine whether
$\lambda$ also belongs to $\inum{\lambda}_p(\smace{A})$. 


If $\lambda\leq \ov{\inner}_p(\smace{A})$, then it is useless to
further considering $\lambda$, as it will not extend the inner
approximation of the $p$th eigenvalue set. If $p=1$ or
$\lambda<\uv{\outer}_{p-1}(\smace{A})$, then $\lambda$ must belong to
$\ovr{\lambda}_p(\smace{A})$, and we can improve the inner bound
$\ov{\inner}_p(\smace{A}):=\lambda$. In this case the algorithm terminates early,
and that is the reason we need $\inum{\outer}_i(\smace{A})$ to be as tight as possible,
where $1 \leq i \leq n$.

If $p>1$ and $\lambda\geq\uv{\outer}_{p-1}(\smace{A})$, we proceed as
follows.  We pick an arbitrary $\mace{C}\in\imace{C}$, such that
$\mace{C}\vr{y}=\vr{0}$; we refer to, e.g.~\cite{Roh2006} for details
on the selection process. Next, we select an arbitrary
$\mace{B}\in\smace{B}$ and let
\begin{align}\label{dfAidentLam}
\mace{A}:=
\begin{pmatrix}\mace{B}&\mace{C}\\
\mace{C}^T&\mace{D}\end{pmatrix}.
\end{align}
We compute the  eigenvalues of $A$, and if $\ov{\inner}_p(\smace{A})<\lambda_p(\mace{A})$,  then we set $\ov{\inner}_p(\smace{A}):=\lambda_p(\mace{A})$, otherwise we do nothing. 

However, it can happen that $\lambda=\ov{\lambda}_i(\smace{A})$, and
we do not identify it, and hence we do not enlarge the inner
estimation $\inner_p(\smace{A})$. Nevertheless, if we apply the method
for all $p=1,\dots,n$ and all principal submatrices of $\smace{A}$, then  we
touch all the boundary points of $\L(\smace{A})$. 
If $\lambda\in\partial \L(\smace{A})$, then $\lambda$ is covered by the
resulting inner approximation. In the case when
$\lambda$ is an upper boundary point, we consider the maximal
$i\in\seznam{n}$ such that $\lambda=\ov{\lambda}_i(\smace{A})$ and
then the $i$th eigenvalue of the matrix \nref{dfAidentLam} must be
equal to $\lambda$. Similarly test are valid for a lower boundary point.



Now we have all the ingredients at hand for the direct version of the
submatrix vertex enumeration approach that is presented in
Algorithm~\ref{algInnerSub}, which improves the upper bound
$\ov{\inner}_p(\smace{A})$ of an inner approximation, where the index
$p$ is still fixed. Let us also mention that in
step~\ref{algInnerSubDecomp} of Algorithm~\ref{algInnerSub}, the decomposition of
$\smace{A}$ according to the index set $J$ means that $\smace{D}$ is
a restriction of $\smace{A}$ to the rows and the columns indexed by $J$,
$\smace{B}$ is a restriction of $\smace{A}$ to the rows and the columns
indexed by $\seznam{n}\setminus J$, and $\imace{C}$ is a restriction
of $\smace{A}$ to the rows indexed by $\seznam{n}\setminus J$ and the
columns indexed by $J$.

\begin{algorithm}[ht]
\caption{(Direct submatrix vertex enumeration for $\ov{\inner}_p(\smace{A})$)
\label{algInnerSub}}
\begin{algorithmic}[1]
\STATE
compute outer approximation $\inum{\outer}_i(\smace{A})$, $i=1,\dots,n$;
\STATE
call Algorithm~\ref{algInnerLocal} to get inner approximation $\inum{\inner}_i(\smace{A})$, $i=1,\dots,n$;
\FORALL{$J\subseteq\seznam{n},\ J\not=\emptyset$,}\label{algInnerSubForI}
\STATE\label{algInnerSubDecomp}
decompose $\smace{A}=\left(\begin{smallmatrix}\smace{\simace{B}}&\simace{C}\\
\simace{C}^T&\smace{\simace{D}}\end{smallmatrix}\right)$ according to $J$;
\FORALL{$\vr{z}\in\{\pm1\}^{|J|},\ z_1=1,$}
\STATE
$\mace{D}:=\Mid{D}+\diag{z}\Rad{D}\diag{z}$;
\FOR{$i=1,\ldots,|J|$}
\STATE
$\lambda:=\lambda_i(\mace{D})$;
\STATE
$\vr{y}:=\vr{v}_i(\mace{D})$;
\IF{$\lambda>\ov{\inner}_p(\smace{A})$ and $\lambda\leq\ov{\outer}_p(\smace{A})$ and $\vr{0}\in\imace{C}\vr{y}$}
\IF{$p=1$ or $\lambda<\uv{\outer}_{p-1}(\smace{A})$}
\STATE
$\ov{\inner}_p(\smace{A}):=\lambda$;
\ELSE
\STATE
find $\mace{C}\in\imace{C}$ such that $\mace{C}\vr{y}=\vr{0}$;
\STATE
$\mace{A}:=
\left(\begin{smallmatrix}\Mid{B}&\mace{C}\\
\mace{C}^T&\mace{D}\end{smallmatrix}\right)$;
\IF{$\lambda_p(\mace{A})>\ov{\inner}_p(\smace{A})$}
\STATE
$\ov{\inner}_p(\smace{A}):=\lambda_p(\mace{A})$;
\ENDIF
\ENDIF
\ENDIF
\ENDFOR
\ENDFOR
\ENDFOR
\RETURN
$\ov{\inner}_p(\smace{A})$.
\end{algorithmic}
\end{algorithm}

\subsubsection{Branch \& bound improvement}\label{sssSubMatBB}

In order to tackle the exponential worst case complexity of
Algorithm~\ref{algInnerSub}, we propose the following modification.
Instead of inspecting all non-empty subsets of $\seznam{n}$ in
step~\ref{algInnerSubForI}, we exploit a branch \& bound method, which
may skip some useless subsets. Let a non-empty $J\subseteq\seznam{n}$
be given. The new, possible improved, eigenvalue $\lambda$ must lie in
the interval
$\inum{\lambda}:=[\ov{\inner}_p(\smace{A}),\ov{\outer}_p(\smace{A})]$.
If this is the case, then the interval matrix
$\smace{A}-\inum{\lambda}\mace{I}$ must be irregular, i.e., it
contains a singular matrix. Moreover, the interval system
$$(\smace{A}-\inum{\lambda}\mace{I})\vr{x}=\vr{0},\ \ \|\vr{x}\|_\infty=1 
\enspace ,$$ has a solution $\vr{x}$, where $x_i=0$ for all $i\not\in
J$. 
We decompose $\smace{A}-\inum{\lambda}\mace{I}$ according to $J$, and, without loss of
generality, we may assume that $J=\sseznam{n-|J|+1}{n}$, then 

\begin{displaymath}
  \smace{A}-\inum{\lambda}\mace{I}=
  \begin{pmatrix}
    \smace{B}-\inum{\lambda}\mace{I}&\imace{C}\\
    \imace{C}^T&\smace{D}-\inum{\lambda}\mace{I}
  \end{pmatrix}.
\end{displaymath}

The interval system becomes
\begin{align}\label{eqIntRedCD}
\imace{C}\vr{y}=\vr{0},\ 
(\smace{D}-\inum{\lambda}\mace{I})\vr{y}=\vr{0},\  
\|\vr{y}\|_\infty=1,
\end{align}
where we considered $\vr{x}=(\vr{0}^T,\vr{y}^T)^T$. This is a
very useful necessary condition. If \nref{eqIntRedCD} has no solution,
then we cannot improve the current inner approximation. We can also
prune the whole branch with $J$ as a root; that is, we will inspect
no index sets $J'\subseteq J$. The strength of this condition follows
from the fact that the system \nref{eqIntRedCD} is overconstrained,
it has more equations than variables. Therefore, with high probability, 
that it has no solution, even for larger $J$.

Let us make two comments about the interval system \nref{eqIntRedCD}. First,
this system has a lot of dependencies. They are caused from the 
multiple occurrences of $\inum{\lambda}$, and by the symmetry of
$\smace{D}$. If no solver for interval systems that can handle  dependencies is
available, then we can solve \nref{eqIntRedCD} as an ordinary interval
system, \quo{forgetting} the dependencies. The necessary condition
will be weaker, but still valid. This is what we did in our implementation.

The second comment addresses the expression $\|\vr{y}\|_\infty=1$. We
have chosen the maximum norm to pertain linearity of the interval
system. The expression could be rewritten as $-\vr{1}\leq \vr{y}\leq
\vr{1}$ (for checking solvability of \nref{eqIntRedCD} we can use
either normalization $\|\vr{y}\|_\infty=1$ or $\|\vr{y}\|_\infty\leq
1$). Another possibility is to write
$$-\vr{1}\leq \vr{y}\leq \vr{1},\ y_i=1\mbox{ for some } i\in\seznam{|J|}.$$
This indicates that we can split the problem into solving $|J|$ interval systems
\begin{align*}
\imace{C}\vr{y}=\vr{0},\ 
(\smace{D}-\inum{\lambda}\mace{I})\vr{y}=\vr{0},\  
-\vr{1}\leq \vr{y}\leq \vr{1},\ y_i=1 
\enspace ,
\end{align*}
where $i$ runs, sequentially, through all the values
$\{1,\dots,|J|\}$; cf. the ILS method proposed in
\cite{HlaDan2008}. The advantage of this approach is that the overconstrained interval
systems have (one)  more  equation than the original overconstrained system, and
hence the resulting necessary condition could be stronger. Our numerical
results discussed in Section~\ref{sNumer} concern this variant. As a
solver for interval systems we utilize the convex approximation approach
by Beaumont \cite{Bea1998}; it is sufficiently fast and produces
narrow enough approximations of the solution set.

\subsubsection{How to conclude for exact bounds?}\label{ssExact}

Let us summarize properties of the submatrix vertex enumeration method.
On the one hand the worst case complexity of the algorithm is rather
prohibitive, $O(3^{n})$, but on the other, we obtain better inner
approximations, and sometimes we get exact bounds of the eigenvalue
sets.  Theorem~\ref{thmBdDes} and the discussion in the previous
section allow us to recognize exact bounds.  For any
$i\in\sseznam{2}{n}$, we have that if
$\ov{\lambda}_i(\smace{A})<\uv{\lambda}_{i-1}(\smace{A})$, then
$\ov{\inner}_i(\smace{A})=\ov{\lambda}_i(\smace{A})$; a similar
inequality holds for the lower bound. This is a rather theoretical
recipe because we may not know a priori whether the assumption is
satisfied. However, we can propose a sufficient condition: If
$\ov{\outer}_i(\smace{A})<\uv{\outer}_{i-1}(\smace{A})$, then the
assumption is obviously true, and we conclude
$\ov{\inner}_i(\smace{A})=\ov{\lambda}_i(\smace{A})$; otherwise we
cannot conclude.

This sufficient condition is another reason why we need a sharp outer
approximation. The sharper it is, the more times we are able to
conclude that the exact bound is achieved.

Exploiting the condition we can also decrease running time of
submatrix vertex enumeration. We call
Algorithm~\ref{algInnerSub} only for $p\in\seznam{n}$ such that $p=1$
or $\ov{\outer}_p(\smace{A})<\uv{\outer}_{p-1}(\smace{A})$. The
resulting inner approximation may be a bit less tight, but the number
of exact boundary points of $\L(\smace{A})$ that we can identify
remains the same.

Notice that there is enough open space for developing better
conditions. For instance, we do not know whether
$\ov{\inner}_i(\smace{A})<\uv{\inner}_{i-1}(\smace{A})$ (computed by
submatrix vertex enumeration) can serve also as a sufficient condition
for the purpose of determining exact bounds.

\section{Numerical experiments}\label{sNumer}

In this section we present some examples and numerical results
illustrating properties of the proposed algorithms.  The experiments we
performed on a PC \texttt{Intel(R) Core 2}, CPU 3~GHz, 2~GB RAM, and
the source code was written in \texttt{C++}. We use
\texttt{GLPK~v.4.23} \cite{GLPK} for solving linear programs,
\texttt{CLAPACK~v.3.1.1} for its linear algebraic routines, and
\texttt{PROFIL/BIAS~v.2.0.4}~\cite{Profil} for interval arithmetic and
basic operations. Notice, however, that routines of \texttt{GLPK} and
\texttt{CLAPACK}\cite{Clapack} do not produce verified solutions; for
real-life problems this may not be acceptable.

\begin{example}\label{exSymUnsym} 
  Consider the following symmetric interval matrix
  \begin{align*} 
    \smace{A}=
    \begin{pmatrix} 1& 2&[1,5]\\ 2& 1&1\\ [1,5]   &1& 1\end{pmatrix}^S.
  \end{align*} 

  Local improvement (Algorithm~\ref{algInnerLocal}) yields
  an inner approximation
  \begin{align*} 
    \inum{\inner}_1(\smace{A})&=[3.7321,\,6.7843],\\
    \inum{\inner}_2(\smace{A})&=[0.0888,\,0.3230],\\
    \inum{\inner}_3(\smace{A})&=[-4.1072,\,-1.0000].
  \end{align*} 
  The same result is obtained by the vertex enumeration
  (Algorithm~\ref{algInnerExp}). Therefore,
  $\ov{\inner}_1(\smace{A})=\ov{\lambda}_1(\smace{A})$ and
  $\uv{\inner}_3(\smace{A})=\uv{\lambda}_3(\smace{A})$.  An outer
  approximation that is needed by the submatrix vertex enumeration
  (Algorithm~\ref{algInnerSub}) is computed using the methods of Hlad\'{i}k
  et al. \cite{HlaDan2008b,HlaDan2008c}. It is 
  \begin{align*} 
    \inum{\outer}_1(\smace{A})&=[3.5230,\,6.7843],\\
    \inum{\outer}_2(\smace{A})&=[0.0000,\,1.0519],\\
    \inum{\outer}_3(\smace{A})&=[-4.1214,\,-0.2019].
  \end{align*} 
  Now, the submatrix vertex enumeration algorithm yields the inner
  approximation
  \begin{align*} 
    \inum{\inner}'_1(\smace{A})&=[3.7321,\,6.7843],\\
    \inum{\inner}'_2(\smace{A})&=[0.0000,\,0.3230],\\
    \inum{\inner}'_3(\smace{A})&=[-4.1072,\,-1.0000].
  \end{align*} 
  Since the outer approximation intervals do not overlap,
  we can conclude that this approximation is exact, that is,
  $\inum{\lambda}_i(\smace{A})=\inum{\inner}'_i(\smace{A})$, $i=1,2,3$.
  
  This example shows two important aspects of the interval eigenvalue
  problem. First, it demonstrates that the vertex enumeration does not produce
  exact bounds in general. Second, the symmetric eigenvalue set can be a
  proper subset of the unsymmetric one, i.e.,
  $\L(\smace{A})\subsetneqq\L(\imace{A})$. 
  This could be easily seen in the matrix
  \begin{align*}
    \begin{pmatrix} 1& 2&1\\ 2& 1&1\\5&1& 1\end{pmatrix}.
  \end{align*} 
  It has three real eigenvalues $4.6458$, $-0.6458$ and
  $-1.0000$, but the second one does not belong to
  $\L(\smace{A})$. Indeed, using the method by Hlad\'{i}k et
  al. \cite{HlaDan2008} we obtain
  \begin{align*}
    \L(\imace{A})=[3.7321,\,6.7843]\cup[-0.6458,\,0.3230]\cup[-4.1072,\,-1.0000].
  \end{align*}
\end{example}

\begin{example}
  Consider the example given by Qiu et al. \cite{QiuChe1996} (see also \cite{HlaDan2008b, YuaHe2008}):
  \begin{align*}
    \smace{A}=
    \begin{pmatrix}
      [2975,3025]& [-2015,-1985]& 0&0\\
      [-2015,-1985]&[4965,5035]& [-3020,-2980]& 0\\
      0& [-3020,-2980] &[6955,7045]& [-4025,-3975]\\
      0 &0& [-4025,-3975]& [8945,9055]\end{pmatrix}^S.
  \end{align*}
The local improvement (Algorithm~\ref{algInnerLocal}) yields an inner approximation
  \begin{align*}
    \inum{\inner}_1(\smace{A})=[12560.8377,\,12720.2273],&\ \ 
    \inum{\inner}_2(\smace{A})=[7002.2828,\,7126.8283],\\ 
    \inum{\inner}_3(\smace{A})=[3337.0785,\,3443.3127],\ \ \ &\ \ 
    \inum{\inner}_4(\smace{A})=[842.9251,\,967.1082].
  \end{align*}
The vertex enumeration (Algorithm~\ref{algInnerExp}) produces the same
  result. Hence we can state that $\ov{\inner}_1(\smace{A})$ and
  $\uv{\inner}_4(\smace{A})$ are optimal.

To call the last method, submatrix vertex enumeration
  (Algorithm~\ref{algInnerSub}) we need an outer approximation. We use
  the following by \cite{HlaDan2008b}
  \begin{align*}
    \inum{\outer}_1(\smace{A})=[12560.6296,\,12720.2273],&\ \ 
    \inum{\outer}_2(\smace{A})=[6990.7616,\,7138.1800],\\ 
    \inum{\outer}_3(\smace{A})=[3320.2863,\,3459.4322],&\ \ 
    \inum{\outer}_4(\smace{A})=[837.0637,\,973.1993].
  \end{align*}
  Now, submatrix vertex enumeration yields the same inner
  approximation as the previous methods. However, now we have more information.
  Since the outer approximation interval are mutually disjoint,
  the obtained results are the best possible. Therefore,
  $\inum{\inner}_i(\smace{A})=\inum{\lambda}_i(\smace{A})$,
  where $i=1,\dots,4$.
\end{example}

\begin{example}\label{exmSing}
Herein, we present two examples for approximating the singular values of an
  interval matrix. Let $\mace{A}\in\R^{m\times n}$ and $q:=\min\{m,n\}$. 
  By the Jordan--Wielandt theorem \cite{GolLoa1996,
    HorJoh1985, Mey2000} the singular values,
  $\sigma_1(\mace{A})\geq\dots\geq\sigma_q(\mace{A})$, of $\mace{A}$
  are identical with the $q$ largest eigenvalues of the symmetric
  matrix
  \begin{displaymath}
    \begin{pmatrix}
      \mace{0}&\mace{A}^T\\
      \mace{A}&\mace{0}
    \end{pmatrix}.
  \end{displaymath}
  Thus, if we consider the singular value sets
  $\inum{\sigma}_1(\imace{A}),\dots,\inum{\sigma}_q(\imace{A})$ of
  some interval matrix $\imace{A}\in\R^{m\times n}$, we can identify
  them as the $q$ largest eigenvalue sets of the symmetric interval
  matrix
  
  \begin{displaymath}
    \imace{M}:=
    \begin{pmatrix}\mace{0}&\imace{A}^T\\
      \imace{A}&\mace{0}
    \end{pmatrix}^S.
  \end{displaymath}

  (1) Consider the following interval matrix from \cite{Dei1991b}.
  \begin{align*}
    \imace{A}=
    \begin{pmatrix}
      [2,3]&[1,1]\\ [0,2]&[0,1]\\ [0,1]&[2,3]
    \end{pmatrix}
  \end{align*}
Both the local improvement and the vertex enumeration result the same inner approximation, i.e.
  $$
  \inum{\inner}_1(\imace{M})=[2.5616,\,4.5431],\ \ 
  \inum{\inner}_2(\imace{M})= [1.2120,2.8541].
  $$
  Thus, $\ov{\sigma}_1(\imace{A})=4.5431$. 
  Additionally, consider the following outer approximation from \cite{HlaDan2008b}.
  \begin{align*}
    \inum{\outer}_1(\imace{M})=[2.0489,\,4.5431],\ \ 
    \inum{\outer}_2(\imace{M})=[0.4239,\,3.1817].
  \end{align*}
  Using Algorithm~\ref{algInnerSub}, we obtain
  $$
  \inum{\inner}'_1(\imace{M})=[2.5616,\,4.5431],\ \ 
  \inum{\inner}'_2(\imace{M})= [1.0000,2.8541].
  $$
  Now we can claim that $\uv{\sigma}_2(\imace{A})=1$, since
  $\uv{\outer}_2(\imace{M})>0$. 
  Unfortunately, we cannot conclude about the 
  exact values of the remaining quantities, since the two outer
  approximation intervals overlap. We know only that
  $\uv{\sigma}_1(\imace{A})\in[2.0489,\,2.5616]$ and
  $\ov{\sigma}_2(\imace{A})\in[2.8541,\,3.1817]$.

  (2) The second example comes from Ahn \& Chen \cite{AhnChen2007}.
  Let $\imace{A}$ be the following interval matrix
  \begin{align*}
    \imace{A}=
    \begin{pmatrix}
      [0.75,\;2.25]& [-0.015,\;-0.005]& [1.7,\;5.1]\\
      [3.55,\;10.65]& [-5.1,\;-1.7]& [-1.95,\;-0.65]\\
      [1.05,\;3.15]& [0.005,\;0.015]& [-10.5,\;-3.5]
    \end{pmatrix}.
  \end{align*}
  Both local improvement and vertex enumeration yield the same result, i.e.
  \begin{align*}
    \inum{\inner}_1(\imace{M})&=[4.6611,\,13.9371],\ \ 
    \inum{\inner}_2(\imace{M}) =[2.2140,\,11.5077],\\ 
    \inum{\inner}_3(\imace{M})&=[0.1296,\,2.9117].
  \end{align*}
  Hence, $\ov{\sigma}_1(\imace{A})=13.9371$.
  As an outer approximation we use the following intervals, using \cite{HlaDan2008b}.
  \begin{align*}
    \inum{\outer}_1(\imace{M})&=[4.3308,\,14.0115],\ \ 
    \inum{\outer}_2(\imace{M})=[1.9305,\,11.6111],\\ 
    \inum{\outer}_3(\imace{M})&=[0.0000,\,5.1000].
  \end{align*}
Running the submatrix vertex enumeration, we get the inner approximation
  \begin{align*}
    \inum{\inner}'_1(\imace{M})&=[4.5548,\,13.9371],\ \ 
    \inum{\inner}'_2(\imace{M}) =[2.2140,\,11.5077],\\ 
    \inum{\inner}'_3(\imace{M})&=[0.1296,\,2.9517].
  \end{align*}
  We cannot conclude that
  $\uv{\sigma}_3(\imace{A})=\uv{\inner}_3(\imace{A})=0.1296$, because
  $\inum{\outer}_3(\imace{M})$ has a nonempty intersection with the
  fourth largest eigenvalue set, which is equal to zero. Also the
  other singular value sets remain uncertain, but within the computed
  inner and outer approximation.

Notice that $\uv{\inner}'_1(\imace{M})<\uv{\inner}_1(\imace{M})$, whence $\uv{\inner}'_1(\imace{M})<\uv{\lambda}_1(\imace{M})=\uv{\sigma}_1(\imace{A})$ disproving the Hertz's Theorem~1 from \cite{Her2009} that the lower and upper limits of $\inum{\lambda}_1(\imace{M})$ and $\inum{\lambda}_n(\imace{M})$ are computable by the vertex enumeration method. It is true only for $\ov{\lambda}_1(\imace{M})$ and $\uv{\lambda}_n(\imace{M})$. 
\end{example}

\begin{example}
  In this example we present some randomly generated examples of large
  dimensions. The entries of the midpoint matrix, $\Mid{\mace{A}}$,
  are taken randomly in $[-20,20]$ using the uniform distribution.
  The entries of the radius matrix $\Rad{\mace{A}}$ are taken
  randomly, using the uniform distribution in $[0,R]$, where $R$ is a
  positive real number. We applied our algorithm on the interval
  matrix $\imace{M}:=\imace{A}^T\imace{A}$, because it has a
  convenient distribution of eigenvalue set---some are overlapping and
  some are not. Sharpness of results is measured using the quantity
  \begin{displaymath}
    1-\frac{\vr{e}^T\Rad{\inner}(\smace{M})}{\vr{e}^T\Rad{\outer}(\smace{M})},
  \end{displaymath}
  where $\vr{e}=(1,\dots,1)^T$.  This quantity lies always within the
  interval $[0,1]$. The closer to zero it is, the tighter the
  approximation.  In addition, if it is zero, then we achieved exact
  bounds. The initial outer approximation,
  $\inum{\outer}_i(\smace{M})$, where $1 \leq i \leq n$, was computed
  using the method due of Hlad\'{i}k et al. \cite{HlaDan2008b}, and
  filtered by the method proposed by Hlad\'{i}k et al. in
  \cite{HlaDan2008c}.  Finally, it was refined according to the
  comment in Section~\ref{ssExact}. For the submatrix vertex
  enumeration algorithm we implemented the branch \& bound
  improvement, which is described in Section~\ref{sssSubMatBB}
  and~\ref{ssExact}.

  The results are displayed in Table~\ref{tabRand}. 

  \begin{table}[phtb]
    \begin{center}
      \begin{tabular}{|r|l|r|r|r|r|r|r|}
        \hline
        \rule[-3pt]{0pt}{15pt}
        {$n$}\,{} & \ $R$ 
        &  \multicolumn{2}{|c|}{Algorithm~\ref{algInnerLocal}}
        &  \multicolumn{2}{|c|}{Algorithm~\ref{algInnerExp}}
        &  \multicolumn{2}{|c|}{Algorithm~\ref{algInnerSub}} \\ 
        \cline{3-8} 
        \rule[-4pt]{0pt}{15pt}
        &  & sharpness & time & sharpness & time & sharpness & time \\ \hline 
        5 & 0.001 &  0.05817 &  0.00 s & 0.05041 &  0.00 s & 0.00000  & 0.04 s \\
        5 & 0.01 &  0.07020 &  0.00 s & 0.05163 &  0.00 s & 0.00000  & 0.03 s \\
        5 & 0.1 & 0.26273 &  0.00 s &  0.23389 &  0.00 s &  0.17332 & 0.04 s \\
        5 & 1 & 0.25112 &  0.00 s &  0.23644 &  0.00 s & 0.20884 & 0.01 s \\
        10 & 0.001 & 0.08077 &  0.00 s & 0.07412 &  0.09 s & 0.00000  & 1.15 s \\
        10 & 0.01 &  0.13011 &  0.01 s & 0.11982 &  0.08 s &  0.04269 & 1.29 s \\
        10 & 0.1 &  0.27378 &  0.01 s & 0.25213 &  0.09 s &  0.12756 & 3.17 s \\
        10 & 1 & 0.56360  &  0.01 s & 0.52330 &  0.09 s &  0.52256 & 2.58 s \\
        15 & 0.001 &  0.07991 &  0.02 s & 0.07557 &  7.3 s &  0.00000 & 16.47 s \\
        15 & 0.01 &  0.21317 &  0.02 s & 0.19625 &  6.5 s &  0.11341 & 2 m 29 s \\
        15 & 0.1 & 0.36410 &  0.02 s & 0.34898 &  7.0 s & 0.34869  & 4 m 58 s \\
        15 & 1 &  0.76036 &  0.02 s & 0.73182 &  7.2 s & 0.73182  & 7.5 s \\
        20 & 0.001 & 0.09399 &  0.06 s & 0.09080 & 7 m 21 s & 0.00000 & 13 m 46 s \\
        20 & 0.01 & 0.24293 & 0.06 s & 0.22976 & 7 m  6 s & 0.12574 & 1 h 14 m 55 s \\
        20 & 0.1 & 0.24293 &  0.06 s & 0.22976 & 7 m 14 s & 0.12574 & 1 h 15 m 41 s \\
        20 & 1 & 0.82044 &  0.06 s & 0.79967 & 7 m 33 s & 0.79967 & 7 m 39 s \\
        25 & 0.001 & 0.14173 &  0.13 s & 0.13397 &  6 h 53 m 0 s &
        0.02871 & 9 h 32 m 54 s \\
        \hline
      \end{tabular}
      \caption{Eigenvalues of random interval symmetric matrices
        $\imace{A}^T\imace{A}$ of dimension $n \times
        n$.\label{tabRand}}
    \end{center}
  \end{table}
\end{example}

\begin{example}
  In this example we present some numerical results on approximating
  singular value sets as introduced in Example~\ref{exmSing}. The
  input consists of an interval (rectangular) matrix
  $\imace{A}\subseteq\R^{m\times n}$ which is selected randomly as in
  the previous example.

  Table~\ref{tabRandSing} presents our experiments.  The time in the
  table corresponds to the computation of the approximation of only the
  $q$ largest eigenvalue sets of the Jordan--Wielandt matrix.  

  \begin{table}[phtb]
    \begin{center}
      \begin{tabular}{|r|r|l|r|r|r|r|r|r|}
        \hline
        \rule[-3pt]{0pt}{15pt}
        {$m$}\,{} & {$n$}\,{} & \ $R$ 
        &  \multicolumn{2}{|c|}{Algorithm~\ref{algInnerLocal}}
        &  \multicolumn{2}{|c|}{Algorithm~\ref{algInnerExp}}
        &  \multicolumn{2}{|c|}{Algorithm~\ref{algInnerSub}} \\ 
        \cline{4-9} 
        \rule[-4pt]{0pt}{15pt}
        && & sharpness & time & sharpness & time & sharpness & time \\ \hline 
        5& 5 & 0.01 & 0.08945 & 0.00 s & 0.07716 & 0.10 s & 0.00000 & 0.53 s \\
        5& 5 & 0.1  & 0.09876 & 0.01 s & 0.09270 & 0.08 s & 0.00000 & 0.73 s \\
        5& 5 & 1    & 0.43560 & 0.01 s & 0.31419 & 0.10 s & 0.26795 & 4.34 s \\
        5&10 & 0.01 & 0.11320 & 0.02 s & 0.10337 & 5.79 s & 0.00000 & 7.91 s \\
        5&10 & 0.1  & 0.13032 & 0.02 s & 0.12321 & 5.98 s & 0.00000 & 8.40 s \\
        5&10 & 1    & 0.35359 & 0.02 s & 0.33176 & 5.52 s & 0.22848 & 21.53 s \\
        5&15 & 0.01 & 0.10603 & 0.05 s & 0.09424 & 5 m 31 s & 0.00000 & 5 m 36 s \\
        5&15 & 0.1  & 0.17303 & 0.04 s & 0.16758 & 5 m 33 s & 0.00000 & 7 m 58 s \\
        5&15 & 1    & 0.46064 & 0.05 s & 0.39708 & 5 m 32 s & 0.31847 & 15 m 47 s \\
        10&10 & 0.01 & 0.10211 & 0.06 s & 0.09652 & 8 m 3 s & 0.00000 & 8 m 19 s \\
        10&10 & 0.1  & 0.13712 & 0.07 s & 0.13387 & 8 m 10 s & 0.00000 & 14 m 12 s \\
        10&10 & 1    & 0.39807 & 0.07 s & 0.35580 & 7 m 52 s &
        0.30279 & 26 h 48 m 38 s \\
        10&15 & 0.01 & 0.09561 & 0.12 s & 0.09116 & 5 h 51 m 53 s &
        0.00000 & 5 h 54 m 56 s \\
        \hline
      \end{tabular}
      \caption{Singular values of random interval matrices of
        dimension $m \times n$.\label{tabRandSing}}
    \end{center}
  \end{table}
\end{example}

\section{Conclusion and future directions}
\label{sec:conclusion}

We proposed a new solution theorem for the symmetric interval
eigenvalue problem, which describes some of the boundary points of the
eigenvalue set. Unfortunately the complete characterisation is still a
challenging open problem.

We developed an inner approximation algorithm (submatrix vertex
enumeration), which in the case where the eigenvalue sets are
disjoint, and the intermediate gaps are wide enough, output exact
results.  To our knowledge, even under this assumption, this is the
first algorithm that can guarantee exact bounds.

Based on our numerical experiments suggest that the local search algorithm is superior to the 
submatrix vertex enumeration algorithm when the input matrices are not of very small dimension.


\paragraph*{ Acknowledgment.}
ET is partially supported by an individual postdoctoral grant from the
Danish Agency for Science, Technology and Innovation.


\bibliographystyle{abbrv}
\bibliography{iev}

\tableofcontents

\end{document}